\journal{Computers and Operations Research}
\theoremstyle{plain}
\newtheorem{theorem}{Theorem}[section]
\newtheorem{lemma}[theorem]{Lemma}
\theoremstyle{definition}
\theoremstyle{remark}
\newcommand{\done}[1]{{\color{brown}{#1}}}
\newcommand{\nobeta}{{\textit{No Beta}}}
\newcommand{\betacutslb}{{\textit{Beta Lb}}}
\newcommand{\betacutsub}{{\textit{Beta Ub}}}
\newcommand{\betacuts}{{\textit{Beta Eq}}}
\newcommand{\allbeta}{{\textit{All Beta}}}
\newcommand{\extrabeta}{{\textit{Extra Beta}}}
\newcommand{\algdefault}{{\textit{Default}}}
\newcommand{\algnopre}{{\textit{No Preprocess}}}
\newcommand{\algnoinit}{{\textit{No Init}}}
\begin{document}

\begin{frontmatter}



\title{{An improved} column-generation-based matheuristic for learning classification trees}

\author[a]{Krunal Kishor Patel}
\author[b]{Guy Desaulniers}
\author[a,c]{Andrea Lodi}

\affiliation[a]{organization={CERC, Polytechnique Montr\'eal},
            addressline={2500 Chemin de Polytechnique}, 
            city={Montr\'eal},
            postcode={H3T 1J4}, 
            state={QC},
            country={Canada}}

\affiliation[b]{organization={Polytechnique Montr\'eal and GERAD},
            addressline={2500 Chemin de Polytechnique}, 
            city={Montr\'eal},
            postcode={H3T 1J4}, 
            state={QC},
            country={Canada}}

\affiliation[c]{organization={Jacobs Technion-Cornell Institute, Cornell Tech and Technion - IIT},
            addressline={2 West Loop Road}, 
            city={New York},
            postcode={10044}, 
            state={NY},
            country={USA}}

\begin{abstract}

Decision trees are highly interpretable models for solving classification problems in machine learning (ML). The standard ML algorithms for training decision trees are fast but generate suboptimal trees in terms of accuracy. Other discrete optimization models in the literature address the optimality problem but only work well on relatively small datasets. \cite{firat2020column} proposed a column-generation-based heuristic approach for learning decision trees. This approach improves scalability and can work with large datasets. In this paper, we describe improvements to this column generation approach. First, we modify the subproblem model to significantly reduce the number of subproblems in multiclass classification instances. Next, we show that the data-dependent constraints in the master problem are implied, and use them as cutting planes. Furthermore, we describe a separation model to generate data points for which the linear programming relaxation solution violates their corresponding constraints. We conclude by presenting computational results that show that these modifications result in better scalability.

\end{abstract}


\begin{highlights}
\item Improved subproblem model resulting in fewer subproblems than in \cite{firat2020column}.
\item Using data-dependent constraints as cutting planes in the master problem.
\item An optimization model to generate these cutting planes on demand.
\item A preprocessing and initialization routine that results in faster training.
\end{highlights}

\begin{keyword}

Machine Learning \sep 
Decision trees \sep 
Column Generation \sep 
Classification \sep 
Mixed Integer Programming.



\end{keyword}

\end{frontmatter}


\section{Introduction}
\label{sec:intro}

In machine learning (ML), a classification problem consists in predicting, from a predefined set of classes, the class to which a data point belongs. Each data point (also called data row) is described by features that are used to predict its class. Classification problems can be solved using decision trees that are highly interpretable models. In a decision tree, each internal node contains a test based on the dataset's features. In this work, we focus on univariate binary decision trees. In such trees, the internal node tests, called hereafter split checks, use only a single feature that can vary from one node to another. Each internal node has two branches. Each leaf node is associated with a target class. A data row starts at the tree's root node and follows the branches based on its feature values and the split checks at the internal nodes. Finally, it reaches a leaf node, where it is classified in the target class associated with that leaf node. 

To determine the split checks of a decision tree, a supervised learning algorithm is run on a training dataset, where the real class of each data point is known. Usually, the goal of this algorithm is to maximize accuracy, i.e., the number of training data points correctly classified. After the training is completed, the decision tree with its selected split checks can be used to classify unseen data rows. The standard ML algorithms for training decision trees, like CART \citep{breiman1984cart} and ID3 \citep{quinlan1986id3}, use heuristics that optimize for one-depth level accuracies. They are fast in terms of training times. However, the generated trees are suboptimal in terms of overall accuracy.

Learning optimal binary decision trees is an NP-complete problem \citep{laurent1976dtreenpc}. Many authors have created optimization models for learning decision trees optimal in terms of accuracy. \cite{gunluk2021dtreemip}, \cite{verwer2017dtreemip}, \cite{verwer2019dtreemip}, \cite{bertsimas2017dtreemip}, and \cite{aghaei2021dtreemaxflow} introduced Mixed Integer Linear Programs (MIPs) for training decision trees. {For a recent survey on continuous and MIP models for training decision trees, refer to \cite{carrizosa2021mathematical}. \cite{d2024margin} described a mixed integer quadratic model for learning optimal trees. \cite{blanco2022robust,blanco2023multiclass} introduced mixed integer non-linear models for training decision trees. \cite{blanquero2021optimal} proposed a continuous optimization approach for training randomized decision trees. \cite{nijssen2007dl8} introduced a dynamic-programming-based exact algorithm to compute optimal decision trees.} \cite{narodytska2018dtreesat} presented a SAT model to train decision trees. Finally, \cite{verhaeghe2020dtreecp} proposed a constraint programming model for training decision trees for binary classification tasks. 

These models can learn trees with better accuracies compared to CART and ID3. Sometimes they can find the optimal decision tree for a given dataset. However, the biggest issue with these models is scalability. \cite{firat2020column} stated that \cite{bertsimas2017dtreemip} and \cite{verwer2017dtreemip} MIP models failed to handle datasets with more than 10,000 rows. Except for the binary classification model of \cite{verhaeghe2020dtreecp}, no other paper (from the ones cited above) showed results on datasets with more than 10,000 rows. 

To address the scalability problem, \cite{firat2020column} introduced a column-generation-based heuristic to train more accurate univariate binary decision trees of predefined depth. In this approach, the master problem uses one variable for each path in the tree, where a path is defined by a set of nodes from the root node to a leaf node, associated with split checks, and a target class at the leaf node. This results in a large number of variables that are generated using subproblems. \cite{firat2020column} proposed a subproblem model that requires solving one subproblem for each leaf and each possible target class. {Integer solutions are obtained by transforming the restricted master problem of the last column generation iteration into a MIP and solving it using a MIP solver.} 

{It should be noted that this approach would still be heuristic even if a branch-and-price algorithm was used because not all possible split checks are considered at each node. However, in our experiments, we analyze the optimality of the computed solution for a fixed set of candidate split checks} to gain information about the best accuracy that can be achieved using branch-and-price. 

In this paper, we improve the \cite{firat2020column} column generation approach to make it faster and more scalable. This includes a preprocessing and initialization routine that reduces the size of the master problem and subproblems in the column generation process and results in faster training. Our main contributions are as follows:

\begin{enumerate}
    \item We introduce a modified subproblem model for the column generation approach with more variables and constraints than the one proposed in \cite{firat2020column} but requires fewer subproblems overall (one subproblem per leaf instead of one subproblem per leaf and target class). We show that the updated subproblem model results in faster training.
    \item We prove that the data-dependent constraints in the master problem {(enforcing that each row in a given dataset reaches a single leaf in the tree)}  are implied and can, thus, be added to the model as cutting planes only if they are violated. 
    \item {We extend the set of these constraints to unknown rows by providing a separation model that can generate any unlabeled row for which the corresponding constraint is  violated.}
    \item {We computationally show the significant benefits of our approach with respect to \cite{firat2020column} on datasets from the literature.}
\end{enumerate}


This paper is organized as follows. Section \ref{sec:firatoverview} presents an overview of the column-generation-based heuristic for training decision trees proposed in \cite{firat2020column}. In Section \ref{sec:methods}, we describe the proposed modifications to this matheuristic. In Section \ref{sec:results}, we report the results of various computational experiments to show the effect of our modifications. Finally, in Section \ref{sec:conclusion}, we present our conclusions and some future research directions.

\section{Overview of \cite{firat2020column} column generation approach}
\label{sec:firatoverview}

In this section, we describe the column-generation-based heuristic proposed in \cite{firat2020column} for generating decision trees. The authors focus on creating univariate binary decision trees of a fixed depth. The trees can handle multiclass classification, unlike in \cite{gunluk2021dtreemip} and \cite{verhaeghe2020dtreecp}. As this is a heuristic, the authors do not claim to generate optimal classification trees. Instead, they try to address the scalability issues in the other optimization models for learning decision trees.
	
The problem that \cite{firat2020column} attempt to solve is as follows. Consider a training dataset with rows in set $R$, features in set $F$, and targets in set $T$. Without loss of generality, assume that each feature is numerical (if not, we can convert it into a numerical feature by using natural numbers or one-hot encoding). The goal is to learn a complete binary classification tree of a given depth that maximizes accuracy on the training dataset. In other words, we want to find the best split checks for each internal node and the best target values for each leaf node in the decision tree. We use $N_{int}$ and $N_{lf}$ to refer to the set of all internal nodes and the set of all leaf nodes, respectively.
	
Each split check $a$ has two components. A feature $f_a \in F$ and a threshold value $\mu_a \in \mathbb{R}$ for that feature. A row $r \in R$ with value $v_r^{f_a}$ for feature $f_a$ takes the left branch from the node with this split check if $v_r^{f_a} \leq \mu_a$ and takes the right branch otherwise. 
	
One can consider optimizing over all possible split checks at each node. However, this can be computationally very expensive. Instead, \cite{firat2020column} used a subset $S_j$ of all possible split checks (called candidate split checks) for each node $j \in N_{int}$. Because of this, the learned tree may not be optimal. Nevertheless, such a tree provides better accuracy than the trees generated, for example, by CART.
	
To generate the candidate split checks, \cite{firat2020column} use the `Threshold sampling' process. They used 300 runs of the CART algorithm on randomly selected 90\% training data rows. For each run, they collect the generated split check for each node. From the collected split checks, they select the most frequent $q$ split checks for each node $j$ and add them to $S_j$, where $q = \left\lfloor \frac{150}{|N_{int}|}\right\rfloor$ for the root node and $q = \left\lfloor \frac{100}{|N_{int}|}\right\rfloor$ for the other nodes. Finally, they run CART on the entire training dataset. They select all the split checks generated in this run, i.e., they add them to their respective set $S_j$. Considering these split checks ensures that the final output will have accuracy at least as high as the CART output if the associated model is solved to optimality.
	
The goal of the learning problem is to find a split check for each node $j \in N_{int}$ from the given set $S_j$ and a target from $T$ for each leaf node to achieve maximum accuracy on the training dataset. 

\cite{firat2020column} proposed to model this problem as the following MIP, called the integer master problem (integer MP) in the context of a column generation algorithm. Consider the paths from the root node to the leaf nodes in the tree. Each path has a specific split check assigned to each internal node it contains and a specific target assigned to its leaf node. For each path, we define a binary variable that takes value 1 if the path is selected and 0 otherwise. The MIP 
ensures that the selected paths are consistent with each other, i.e., they form a tree. Relying on the notation presented in Table~\ref{tab:integermpnotations}, the MIP is as follows: 

\begin{table}[t]
\caption{Notation for the MIP (\ref{masterold}).}
\label{tab:integermpnotations}
\begin{center}
\scalebox{0.85}{
\begin{tabular}{lp{0.7\textwidth}}
    \textbf{Sets} & \\
    $R$ & set of rows in the dataset. \\
    $F$ & set of features in the dataset. \\
    $N_{lf}, N_{int}$& sets of leaf and internal nodes in the decision tree. \\
    $p_{BT}(l)$ & set of nodes in the paths to leaf $l$ in binary tree.\\
    $DP_l$& set of decision paths ending in leaf $l$.\\
    $R^{l}(p)$& subset of rows directed to leaf $l$ through path $p$.\\
    $S_j$& set of candidate split checks for node $j$.\\*[3pt]
    \textbf{Parameters} & \\
    $s_p(j)$ &split check assigned at node $j$ in path $p$.\\
    $CP(p)$& number of correct predictions  for path $p$.\\*[3pt]
    \textbf{Decision Variables}& \\
    $x_p$& binary variable indicating if path $p \in DP_l$ is assigned to leaf $l \in N_{lf}$.\\
    $\rho_{j,a}$& binary variable indicating if split check $a \in S_j$ is assigned to node $j \in N_{int}$.
\end{tabular}
}
\end{center}
\end{table}

\begin{subequations}
\label{masterold}
\begin{align}
    Max \quad & \sum_{l\in N_{lf}} \sum_{p \in DP_{l}} CP(p)x_p \label{masteroldobj}\\
    s.t.  \quad &  \sum_{p \in DP_{l}} x_p = 1,  \quad \forall l \in  N_{lf} \label{alpha}\\
    & \sum_{l\in N_{lf}} \sum_{\substack{p \in DP_{l}:\\r\in R^l(p)}} x_p = 1, \quad \forall r \in R \label{beta}\\
    & \sum_{\substack{p \in DP_{l}: \\s_p(j) = a}} x_p = \rho_{j,a}, \quad \forall l \in N_{lf}, j \in p_{BT}(l) \cap N_{int}, a \in S_j \label{gamma}\\
    & x_p \in \{0,1\},  \quad \forall p \in DP_{l}, l \in N_{lf} \label{firatm2}\\
    & \rho_{j,a} \in \{0,1\}, \quad \forall j \in N_{int}, a \in S_j. \label{firatbinrho}
\end{align}
\end{subequations}

The objective function (\ref{masteroldobj}) maximizes accuracy (minimizes the misclassification error). Constraints (\ref{alpha}) ensure that exactly one path is selected for each leaf. Constraints (\ref{beta}) force each row to follow exactly one selected path. The known class of the row may not match the target of the followed path. Constraints (\ref{gamma}) impose that the selected paths are consistent with respect to split checks on common nodes: if two paths are selected with a common node, the same split check must be assigned to the common node in both paths. Note that in this model, the variable upper bounds are implied and can be dropped while solving the linear relaxation of the MIP. 

In general, each row in the training dataset has equal weights. However, this model is able to support rows with different weights. In this case, we need to compute the objective coefficients $CP(p)$ as a weighted sum of the correctly classified rows for each path $p$.

Model (\ref{masterold}) contains a large number of variables, one per path in sets $DP_l$, $l \in N_{lf}$. As proposed by \cite{firat2020column}, we can alleviate this drawback by applying column generation. In this context, we refer to the linear relaxation of the MIP as the master problem (MP) and to the MIP itself as the integer MP. Column generation is used to solve the MP. At each iteration, it solves using a standard linear programming solver a restricted MP (RMP), i.e., the MP restricted to a small subset of its variables. In the first iteration, the RMP is initialized with the paths generated from the last run of CART on the complete training dataset in the threshold sampling process. Solving the current RMP provides a pair of optimal primal and dual solutions. To verify if the primal solution is also optimal for the whole MP, a set of subproblems (SPs), namely, one for each leaf node $l\in N_{lf}$ and each target class $t\in T$, is solved with the goal of identifying  columns (paths) with a positive reduced cost with respect to the current RMP dual solution. When such columns are found, they are added to the RMP and another iteration is started. Otherwise, the column generation process stops with an optimal solution to the MP.

Relying on the additional notation presented in Table \ref{tab:suboldnotations}, the SP for leaf $l \in N_{lf}$ and target $t \in T$ can be formulated as the following MIP:
	
\begin{table}[t]
\caption{Additional notation for the SP (\ref{subold}) defined for leaf $l$ and target $t$.}
\label{tab:suboldnotations}
\begin{center}
\scalebox{0.85}{
    \begin{tabular}{lp{0.7\textwidth}}
        \textbf{Sets} & \\
        $R_t$ & set of rows in the dataset with target $t$. \\
        $LC(l)$ & set of nodes in $N_{int}$ that have left child in $p_{BT}(l)$.\\
        $RC(l)$ & set of nodes in $N_{int}$ that have right child in $p_{BT}(l)$.\\
        $T(r)$ &  set of split checks for which row $r$ takes the left branch: $\{a = (f_a , \mu_a ) \in \cup_{j\in N_{int}} S_{j}: v_r^{f_a} \leq \mu_a\} $.\\
        $F(r)$ &  set of split checks for which row $r$ takes the right branch: $\{a = (f_a , \mu_a ) \in \cup_{j\in N_{int}} S_{j}: v_r^{f_a} > \mu_a\} $.\\*[3pt]        
        \textbf{Parameters} & \\
        $k$& depth of the decision tree, levels are indexed by $h = 0,..., k - 1$.\\
        $v_r^f$& value of feature $f$ in row $r$.\\
        $\alpha_l$& dual value of constraint (\ref{alpha}) for leaf $l$.\\
        $\beta_r$& dual value of constraint (\ref{beta}) for row $r$.\\
        $\gamma_{l,j,a}$& dual value of constraint (\ref{gamma}) for leaf $l$, node $j$, and split check $a$.\\*[3pt]
        \textbf{Decision Variables}& \\
        $y_r$& binary variable indicating if row $r \in R$ reaches leaf $l$.\\
        $u_{j,a}$& binary variable indicating if split check $a \in S_j$ is assigned to node $j \in p_{BT}(l)$.
    \end{tabular}
    }
\end{center}
\end{table}
	
\begin{subequations}
    \label{subold}
   \allowdisplaybreaks
    \begin{align}
        \quad Max \quad & \sum_{r\in R_t} y_r - \alpha_l - \sum_{j \in p_{BT}(l)}\sum_{a \in S_j} \gamma_{l,j,a}u_{j,a} - \sum_{r\in R}\beta_r y_r \label{oldspobj}\\
        s.t.  \quad &  \sum_{a \in S_j} u_{j,a} = 1,  \quad \forall j \in  p_{BT}(l) \label{firats1}\\
        & y_r \leq \sum_{a\in S_j \cap T(r)} u_{j,a} , \quad \forall j \in LC(l), r\in R \label{firats2}\\
        & y_r \leq \sum_{a\in S_j \cap F(r)} u_{j,a} , \quad \forall j \in RC(l), r\in R \label{firats3}\\
        & \sum_{j\in LC(l)} \sum_{a \in S_j \cap T(r)} u_{j,a} +  \sum_{j\in RC(l)} \sum_{a \in S_j \cap F(r)} u_{j,a} - (k-1) \leq y_r, \quad \forall r \in R \label{firats4}\\
        & \sum_{j \in p_{BT}(l)} u_{j,a} \leq 1, \quad \forall a \in \bigcup_{j \in p_{BT}(l)} S_{j} \label{firats5}\\
        & y_r \in \{0,1\},  \quad \forall r \in R \\
        & u_{j,a} \in \{0,1\}, \quad \forall j \in RC(l) \cup LC(l), a \in S_j.
    \end{align}
\end{subequations}

The objective function (\ref{oldspobj}) aims at maximizing the reduced cost of the path generated. Constraints (\ref{firats1}) ensure that exactly one split is selected for each node in the path. Constraints (\ref{firats2})-(\ref{firats4}) evaluate if the row $r$ would reach the leaf by following the generated path. That is, the variable $y_r$ takes the value 1 if and only if a split check from the set $T(r)$ is selected for all nodes in $LC(l)$ and a split check from the set $F(r)$ is selected for all the nodes in $RC(l)$. Because of the objective direction, constraint (\ref{firats4}) for a row~$r$ is only needed if {$\beta_r \geq 1$ for $r \in R_t$ and $\beta_r \geq 0$ for $r \not\in R_t$}. Constraints (\ref{firats5}) impose that each split is selected at most once in a path. Given that these constraints led to infeasible SPs when the candidate sets of split checks are small for some nodes in our  experiments, we have decided to remove them. {This only increases the search space but does not change the rest of the algorithm}.

Because model (\ref{subold}) is a MIP, generating paths using it can be computationally expensive. To address that, \cite{firat2020column} also introduced a pricing heuristic. This heuristic randomly generates paths and evaluates their reduced costs. If the reduced cost of a generated path is positive, the path is added to the RMP. Model (\ref{subold}) is only used if the heuristic fails to find any path with a positive reduced cost.

In theory, the column generation process stops if model (\ref{subold}) proves that all columns have a non-positive reduced cost. Given that the convergence might be slow, it can also terminate when a time limit is reached.

To obtain an optimal integer solution to the integer MP (\ref{masterold}), non-truncated column generation can be embedded into a branch-and-bound algorithm to yield a branch-and-price algorithm \citep{Barnhart1998}. However, to limit the computational times, \cite{firat2020column} rather applied a RMP heuristic \citep{JoncourEtAl2010,SadykovEtAl2019} that consists in converting the last RMP solved into a MIP that can be solved using a commercial MIP solver without adding any new columns. The selected paths in the final solution describe a valid learned tree.

\section{Modifications of the column generation approach}
\label{sec:methods}

This section describes our modifications of the column-generation-based heuristic of \cite{firat2020column}. In Section \ref{sec:mergedsp}, we show how to reduce the number of SPs by adding the target computation for the leaf nodes in the constraints of the SPs. In Section \ref{sec:masterrow}, we analyze the constraints (\ref{beta}) in the integer MP (\ref{masterold}) and show that they are implied by the other constraints. In Section \ref{sec:sepmasterrow}, we describe a separation algorithm for using constraints (\ref{beta}) as cutting planes to speed up the solving process. 
	
\subsection{Merged SPs}
\label{sec:mergedsp}

In \cite{firat2020column}, there is one SP (\ref{subold}) for each leaf $l \in N_{lf}$ and each target $t \in T$. Consequently, there are $|N_{lf}| \times |T|$ SPs in total. We propose to merge the SPs by incorporating the target computation in the MIP model using extra variables and constraints. This reduces the number of SPs to $|N_{lf}|$. This modification is inspired by the flow-based MIP formulation of \cite{aghaei2021dtreemaxflow}, which is very similar to (\ref{subold}), except that it focuses on the entire tree instead of just one path. Relying on the additional notation presented in Table \ref{subnew}, the merged SP for a given leaf $l \in N_{lf}$ is formulated as follows:
	
\begin{table}[t]
\caption{Additional notation for the SP (\ref{subnew}) for leaf $l$.}
\label{tab:subnewnotations}
\begin{center}
\scalebox{0.85}{
    \begin{tabular}{lp{0.7\textwidth}}
       
        \textbf{Parameters} & \\
        $W_r$ & weight of row $r$.\\*[3pt]
        
        \textbf{Decision Variables}& \\
        $z_r$& binary variable indicating if row $r \in R$ reaches leaf $l$ and has the same target as the path being generated.\\
        $w_t$& binary variable indicating if target $t \in T$ is selected for the generated path.\\
    \end{tabular}
}
\end{center}
\end{table}

\begin{subequations}
    \label{subnew}
   \allowdisplaybreaks
    \begin{align}
        \quad Max \quad & \sum_{r\in R} W_r z_r - \alpha_l - \sum_{j \in p_{BT}(l)}\sum_{a \in S_j} \gamma_{l,j,a}u_{j,a} - \sum_{r\in R}\beta_r y_r \\
        s.t.  \quad &  \sum_{a \in S_j} u_{j,a} = 1,  \quad \forall j \in  p_{BT}(l) \label{newsp1}\\
        & y_r \leq \sum_{a\in S_j \cap T(r)} u_{j,a} , \quad \forall j \in LC(l), r\in R \label{newsp2}\\
        & y_r \leq \sum_{a\in S_j \cap F(r)} u_{j,a} , \quad \forall j \in RC(l), r\in R \label{newsp3}\\
        & \sum_{j\in LC(l)} \sum_{a \in S_j \cap T(r)} u_{j,a} +  \sum_{j\in RC(l)} \sum_{a \in S_j \cap F(r)} u_{j,a} - (k-1) \leq y_r, \quad \forall r \in R \label{newsp4}\\
        & z_r \leq y_r, \quad \forall r\in R  \label{newsp6}\\
        & z_r \leq w_t, \quad \forall t\in T, r \in R_t  \label{newsp7}\\
        & \sum_{t \in T} w_t = 1  \label{newsp8}\\
        & z_r \in \{0,1\},  \quad \forall r \in R \\
        & y_r \in \{0,1\},  \quad \forall r \in R \\
        & u_{j,a} \in \{0,1\} \quad \forall j \in RC(l) \cup LC(l), a \in S_j.
    \end{align}
\end{subequations} 

Except for the first term, the objective function is the same as (\ref{oldspobj}). The new first term $\sum_{r \in R} W_rz_r$ computes the weighted sum of the number of rows being correctly classified. The weight $W_r$ of a row $r$ is generally set to 1. However, using the weights in the model allows us to train on the dataset where the rows have different weights (i.e., not all the rows are equally important). We also need this because our approach changes the weights of the rows in the dataset during the preprocessing stage. 

Constraints (\ref{newsp1})-(\ref{newsp4}) are the same as constraints (\ref{firats1})-(\ref{firats4}). As previously mentioned, constraint (\ref{newsp4}) for a row $r$ is only useful if {$\beta_r \geq 1$ for $r \in R_t$ and $\beta_r \geq 0$ for $r \not\in R_t$} because of the objective direction. Constraints (\ref{newsp6})-(\ref{newsp7}) ensure that the variable $z_r$ takes value 1 if the row $r$ reaches leaf $l$ and has the same target as the path being generated. Constraint (\ref{newsp8}) imposes the selection of exactly one target for the generated path. 

Overall the new SP model (\ref{subnew}) has $2 \times |R| + 1$ new constraints and $|R| + |T|$ new variables. Because of the merging, we have fewer SPs, but the model is larger than model (\ref{subold}).

\subsection{Redundancy of {MP} constraints (\ref{beta})}
\label{sec:masterrow}

Observe that constraints (\ref{beta}) in the integer MP (\ref{masterold}) are dependent on the dataset but do not depend on the target of the rows. We show that these constraints are implied by constraints (\ref{alpha}), (\ref{gamma})--(\ref{firatbinrho}). The proof of this result relies on the the following lemma that is presented first.


\begin{lemma}
\label{lma:singlesplit}
    In any solution to the model (\ref{masterold}) that satisfies the constraints (\ref{alpha}), (\ref{gamma}), and (\ref{firatbinrho}), there exists a split check $a^*_j\in S_j$ for every internal node $j\in N_{int}$ such that  
\begin{equation}
     \rho_{j,a^*_j} = 1 \quad \text{and} \quad \rho_{j,a} = 0, \quad \forall a \in S_j \setminus \{ a^*_j \}. \label{delta}
\end{equation}
\end{lemma}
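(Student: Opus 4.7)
The plan is to aggregate the consistency constraints (\ref{gamma}) over all split checks at a given internal node and then exploit constraint (\ref{alpha}) together with the integrality requirement (\ref{firatbinrho}) to show that exactly one $\rho_{j,a}$ can be nonzero.

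First, I would fix an arbitrary internal node $j \in N_{int}$ and choose any leaf $l \in N_{lf}$ with $j \in p_{BT}(l)$ (such a leaf exists because every internal node has descendants). Summing equation (\ref{gamma}) over all $a \in S_j$ yields
\begin{equation*}
\sum_{a \in S_j} \sum_{\substack{p \in DP_{l}:\\ s_p(j) = a}} x_p \;=\; \sum_{a \in S_j} \rho_{j,a}.
\end{equation*}
The key observation is that every path $p \in DP_l$ carries exactly one split check $s_p(j) \in S_j$ at node $j$, so the sets $\{p \in DP_l : s_p(j) = a\}$ for $a \in S_j$ form a partition of $DP_l$. Therefore the double sum on the left collapses to $\sum_{p \in DP_l} x_p$, which equals $1$ by constraint (\ref{alpha}). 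This gives $\sum_{a \in S_j} \rho_{j,a} = 1$.

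Finally, I would combine this linear identity with the binary restriction (\ref{firatbinrho}): a sum of $\{0,1\}$-valued variables equal to $1$ forces exactly one of them to take value $1$ and all others to take value $0$. Defining $a^*_j$ as that unique split check yields (\ref{delta}). Repeating the argument for each $j \in N_{int}$ completes the proof.

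I do not expect any real obstacle here; the argument is essentially a one-line aggregation. The only subtlety worth stating explicitly is that the partition argument on $DP_l$ genuinely requires $s_p(j)$ to be well-defined and unique for each path, which follows from the definition of a decision path and the fact that $j$ lies on the path to $l$.
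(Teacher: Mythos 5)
Your proposal is correct and follows exactly the paper's argument: sum constraints (\ref{gamma}) over $a \in S_j$ for a leaf $l$ with $j \in p_{BT}(l)$, use the partition of $DP_l$ by $s_p(j)$ together with constraint (\ref{alpha}) to get $\sum_{a \in S_j} \rho_{j,a} = 1$, and conclude via integrality (\ref{firatbinrho}). The only difference is that you make the partition observation explicit, which the paper leaves implicit.
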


%

\begin{proof}
Consider an internal node $ j \in N_{int}$.
There exists a leaf $l \in N_{lf}$ such that $ j \in p_{BT}(l)$.
We sum constraints (\ref{gamma}) over all candidate split checks $a \in S_j$ for leaf $l$ and node $j$ to obtain
$$\sum_{a \in S_j}\rho_{j,a} = \sum_{a \in S_j}\sum_{\substack{p \in DP_{l}: \\s(j) = a}} x_p = \sum_{p \in DP_{l}} x_p = 1,$$
where the last equality follows from constraints (\ref{alpha}). Given that the $\rho_{j,a}$ variables are binary according to (\ref{firatbinrho}), there is a single variable in the first sum, say for $a = a^*_j$, that takes value 1. All the others are equal to 0.
\end{proof}

\begin{theorem}
\label{thm:impliedbeta}
    For any decision tree with depth $k \geq 1$, constraints (\ref{beta}) are implied by constraints (\ref{alpha}), (\ref{gamma})--(\ref{firatbinrho}). In other words, model (\ref{masterold}) is equivalent to:

    \begin{subequations}
    \label{masternew}
    \begin{align}
        Max \quad & \sum_{l\in N_{lf}} \sum_{p \in DP_{l}} CP(p)x_p \\
        s.t.  \quad &  \sum_{p \in DP_{l}} x_p = 1,  \quad \forall l \in  N_{lf} \label{newalpha}\\
        & \sum_{\substack{p \in DP_{l}: \\s_p(j) = a}} x_p = \rho_{j,a}, \quad \forall l \in N_{lf}, j \in p_{BT}(l) \cap N_{int}, a \in S_j \label{newgamma}\\
        & x_p \in \{0,1\}, \quad \forall p \in DP_{l}, l \in N_{lf} \label{newfiratm2}\\
        & \rho_{j,a} \in \{0,1\} \quad \forall j \in N_{int}, a \in S_j.
    \end{align}
    \end{subequations}
\end{theorem}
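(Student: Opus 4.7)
My plan is to reduce the statement to a routing argument made possible by Lemma~\ref{lma:singlesplit}. Fix any feasible solution to~(\ref{alpha}), (\ref{gamma})--(\ref{firatbinrho}). By Lemma~\ref{lma:singlesplit}, at every internal node $j \in N_{int}$ there is a unique split check $a^*_j$ with $\rho_{j,a^*_j}=1$. This pins down one global tree of split checks, and the task reduces to showing that each row $r \in R$ traverses exactly one selected path in this tree.

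First, I would use~(\ref{alpha}) to select, for each leaf $l \in N_{lf}$, the unique path $p_l \in DP_l$ with $x_{p_l}=1$. Plugging $\rho_{j,a^*_j}=1$ and $\rho_{j,a}=0$ for $a \neq a^*_j$ into~(\ref{gamma}) forces $s_{p_l}(j) = a^*_j$ for every internal $j \in p_{BT}(l)$. So the set of selected paths $\{p_l : l \in N_{lf}\}$ is globally consistent with the tree of $a^*_j$'s.

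Second, for each row $r$ I would define the natural trajectory in this fixed tree: starting at the root, at every internal node $j$ descend to the left child if $v_r^{f_{a^*_j}} \leq \mu_{a^*_j}$, and to the right child otherwise. This trajectory terminates at a unique leaf $l^*(r) \in N_{lf}$. Combining step one with the definitions of $LC(l^*(r))$, $RC(l^*(r))$, and $R^{l^*(r)}(p)$, the selected path $p_{l^*(r)}$ satisfies $r \in R^{l^*(r)}(p_{l^*(r)})$. Conversely, for any $l \neq l^*(r)$, the binary-tree paths $p_{BT}(l)$ and $p_{BT}(l^*(r))$ diverge at some common internal ancestor, at which the split check $a^*_j$ routes $r$ toward $l^*(r)$ and away from $l$, so $r \notin R^l(p_l)$. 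Therefore the left-hand side of~(\ref{beta}) equals $x_{p_{l^*(r)}} = 1$, and constraint~(\ref{beta}) for row~$r$ is satisfied.

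The main obstacle I anticipate is keeping this routing argument fully rigorous: the statement that the trajectory reaches exactly one leaf and is incompatible with every other selected path is visually obvious but must be derived from the formal definitions of $R^l(p)$, $LC(l)$, $RC(l)$, and the definition of $T(r)$, $F(r)$ from Table~\ref{tab:suboldnotations}. A clean way to discharge this is a short induction on the depth $k$, descending one level at a time from the root and using that a row lies in $R^l(p)$ precisely when $s_p(j) \in T(r)$ for $j \in LC(l) \cap p_{BT}(l)$ and $s_p(j) \in F(r)$ for $j \in RC(l) \cap p_{BT}(l)$.
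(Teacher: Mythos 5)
Your argument is correct and follows essentially the same route as the paper's proof: invoke Lemma~\ref{lma:singlesplit} to fix a unique split check $a^*_j$ per internal node, observe via constraints~(\ref{newgamma}) and~(\ref{newalpha}) that the unique selected path at each leaf uses exactly these split checks, and then trace row $r$ down the resulting tree to a unique leaf so that exactly one selected path appears in the left-hand side of~(\ref{beta}). Your version is, if anything, slightly more explicit than the paper's in arguing that $r \notin R^l(p_l)$ for every leaf $l \neq l^*(r)$ via divergence at a common ancestor; the only piece you leave implicit is the trivial converse direction that model~(\ref{masternew}) is a relaxation of model~(\ref{masterold}), which the paper states in one sentence.
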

\begin{proof}

Since model (\ref{masternew}) is a relaxation of model (\ref{masterold}), any feasible solution to model (\ref{masterold}) is also feasible for model (\ref{masternew}). We now show the opposite direction.

Consider an arbitrary data row $r \in R$. Let $P_r$ be the subset of paths that are followed by row $r$ from root node to some leaf node (which may have a different target than the row). We need to show that for this row $r$, any feasible solution to model (\ref{masternew}) contains exactly one path in set $P_r$.

Consider a feasible solution to model (\ref{masternew}), which, therefore, satisfies constraints (\ref{alpha}), (\ref{gamma}), and (\ref{firatbinrho}). By Lemma \ref{lma:singlesplit}, we know that, in this solution, exactly one split check is assigned to each node. At the root node of the decision tree, let the assigned split check be $a = (f_a, \mu_a)$. Clearly, either $v_r^{f_a} \leq \mu_a$ and the row follows the left branch, or $v_r^{f_a} > \mu_a$ and the row follows the right branch. Using a similar argument at each other node reached by the row, we infer that row $r$ visits a unique set of nodes (and split checks) to reach a unique leaf node $l \in N_{lf}$ in the tree. 

In set $P_r$, there are exactly $|T|$ different paths that correspond to these sets of nodes and split checks, namely, one for each possible target in $T$. However, constraints (\ref{newalpha}) and (\ref{newfiratm2}) ensure that exactly one of these paths is selected for leaf $l$, implying that constraint (\ref{beta}) for row $r$ is satisfied by this solution of model (\ref{masternew}). Consequently, all feasible solutions to model (\ref{masternew}) are also feasible for model (\ref{masterold}).
\end{proof}

\subsection{Search for violated MP constraints (\ref{beta})}
\label{sec:sepmasterrow}

Although the constraints (\ref{beta}) are implied in the integer MP (\ref{masterold}), they are needed for a stronger linear relaxation (see results in Section \ref{sec:resmasterrow}). So, instead of removing them, we use them as cutting planes. In other words, we solve the RMP without constraints (\ref{beta}) and add to the RMP only those that are violated by the linear relaxation solution. The search for the violated constraints is performed by inspection of the individual rows in $R$.

Furthermore, we experimented with considering only one bound (lower or upper) when adding constraints (\ref{beta}) as cutting planes. We found that the upper bound inequality version of constraints (\ref{beta}) performs similarly to using them as equality constraints. On the other hand, the lower bound inequality version of constraints (\ref{beta}) results in a poor linear relaxation. Section \ref{sec:resmasterrow} will detail experimental results.

The linear relaxation strength of model  (\ref{masterold}) depends on the dataset $R$. If it does not contain sufficient rows to obtain a strong bound, the column-generation-based heuristic may result in a poor-quality solution. Observe, however, that there is no need to limit the constraints (\ref{beta}) to a predetermined dataset $R$. Indeed, any possible row described by a set of feature values should follow a single path in the tree to reach a  leaf (independently of the target associated with this leaf). Consequently, we do not limit our search for violated constraints (\ref{beta}) to the rows in set $R$, but also to any potential row for which we do not know the target. We will call the latter rows the \textit{unlabeled rows} as no target will be associated with them.

To generate an unlabeled row for which a given MP solution violates the corresponding constraint (\ref{beta}), we develop a separation model. In fact, this model does not identify a specific row but determines whether the row would take the left or right branch on each split check. With {this} information, we can deduce bounds on the feature values that a row should have to yield a violated constraint and construct a corresponding unlabeled row by selecting arbitrary feature values respecting these bounds.

Consider the solution of the current RMP. Let $DP$ denote the subset of paths $p$ such that $x_p$ takes a positive value $x^*_p$ in this solution. Furthermore, for any row $r$, let $P_r$ be the subset of paths in $DP$ followed by this row. We want to generate an unlabeled row $r$ (or equivalently, find the branch taken on each split check) such that $\sum_{p \in P_r} x^*_p \neq 1$. This constraint can be violated in two ways. Either the upper bound or the lower bound can be violated. From preliminary experiments using the existing constraints (\ref{beta}) with only the lower or only the upper bound, we observed that using only the upper bound results in a stronger linear relaxation. Hence, we only focus on the violations of the upper bound. 

Let $\theta_p$ be a binary variable that takes value $1$ if the generated row follows path $p\in DP$. Let $\psi_a$ be a binary variable that takes value $1$ if the row takes the right branch on split check $a\in S$ and $0$ otherwise. Thus, the row follows a path $p$ if $\psi_a = 1$ for all split checks $a\in R_p$ and $\psi_a = 0$ for all split checks $a\in L_p$, where $R_p$ (resp. $L_p$) is the set of split checks for which path $p$ takes the right (resp. left) branch.

With these variable definitions, we can determine if the generated row follows a path $p\in DP$ using the following equation:
\begin{equation} 
\theta_p = \bigwedge_{a \in L_p} \neg \psi_a \land \bigwedge_{a \in R_p} \psi_a.
\end{equation}

As different split checks on possibly different paths may involve the same feature, there must be some consistency between the branches taken by the unlabeled row on these checks, i.e., there might be some implied relations between split checks. Assume that two split checks $a$ and $b$ are defined on the same feature $f$ with different threshold values $\mu_a$ and $\mu_b$ with $\mu_a \leq \mu_b$. In that case, whenever the row takes the left branch on split check $a$ ($v_r^{f} \leq \mu_a$ or, equivalently, $\psi_a = 0$), it must also take the left branch on split check $b$ ($v_r^{f} \leq \mu_b$ or $\psi_b = 0$). This relation translates into the constraint
\begin{equation}
 \neg \psi_a \implies \neg \psi_b.
\end{equation}

Relying on the notation presented in Table \ref{tab:cutgennotations}, the separation model for generating violated constraints (\ref{beta}) for a given solution of the MP can be expressed as follows:

\begin{table}[t]
\caption{Notation for the separation model (\ref{cut_gen}).}
\label{tab:cutgennotations}
\begin{center}
\scalebox{0.85}{
    \begin{tabular}{lp{0.7\textwidth}}
        \textbf{Sets} & \\
        $DP$& set of {paths $p$ with $x^*_p > 0$}.\\
        $S$& set of all split checks.\\
        $L_p$ & set of split checks where path $p$ takes the left branch.\\
        $R_p$ & set of split checks where path $p$ takes the right branch.\\
        \textbf{Parameters} & \\
        $x_p^{*}$ & value of variable $x_p$ in the current RMP solution.\\
        $f_a$ & feature of split check $a$.\\
        $\mu_a$ & threshold of split check $a$.\\
        \textbf{Decision Variables}& \\
        $\theta_p$& binary variable indicating if the generated row follows path $p \in DP$.\\
        $\psi_a$& binary variable indicating if the generated row takes the right branch on split check $a \in S.$
    \end{tabular}
    }
\end{center}
\end{table}

\begin{subequations}
    \label{cut_gen}
    \begin{align}
        Max \quad & \sum_{p \in DP}x_p^* \theta_p \label{sep_obj1}\\
        s.t.  \quad &  \theta_p = \bigwedge_{a \in L_p} \neg \psi_a \land \bigwedge_{a \in R_p} \psi_a, \quad \forall p \in DP \label{path_split_link} \\
        & \neg \psi_a \implies \neg \psi_b, \quad \forall a,b \in S: f_a = f_b, \mu_a \leq \mu_b \label{split_consistency}\\
        & \theta_p \in \{0,1\},  \quad \forall p \in DP\\
        & \psi_a \in \{0,1\},  \quad \forall a \in S. \label{bin_psi}
    \end{align}
\end{subequations}


The objective function (\ref{sep_obj1}) aims at finding an unlabeled row with a maximum left-hand side in constraint (\ref{beta}). Constraints (\ref{path_split_link}) and (\ref{split_consistency}) have been introduced above. 

Model (\ref{cut_gen}) is solved by constraint programming, using the CP-SAT solver. A violated constraint (\ref{beta}) is found whenever the optimal value of (\ref{cut_gen}) is strictly greater than 1. In fact, we retrieve all intermediate solutions obtained by the solver that have an objective value greater than 1 and generate one cut for each of them. These cuts, if any, are then added to the MP.

\section{Computational results}
\label{sec:results}
In this section, we report the results obtained in our computational experiments. Section \ref{sec:expsetup} gives an overview of the datasets used for these experiments and the experimental setup. In Section \ref{sec:resinitprep}, we describe our preprocessing and initialization routines and show their effects. In Section \ref{sec:resmergedsp}, we report the computational results on merging the SPs as described in Section \ref{sec:mergedsp}. In Section \ref{sec:resmasterrow}, we present the computational results for different ways of using constraints (\ref{beta}) in the MP. Finally, in Section \ref{sec:resall}, we compare our revised column generation matheuristic with that of \cite{firat2020column}.

\subsection{Datasets and experimental setup}
\label{sec:expsetup}

We used 12 datasets from the UCI repository \citep{UCI2019} for our computational experiments. There are six \emph{small} datasets involving between 500 and 10,000 rows and six \emph{large} datasets containing over 10,000 rows. The small datasets are also used in \cite{verwer2019dtreemip} and \cite{firat2020column}. These datasets are already processed, and all features are numerical with no missing values. Each dataset is split into training (50\% of the rows) and testing (25\%) parts, and the split is done randomly five times.\footnote{The remaining 25\% was used by \cite{bertsimas2017dtreemip} for validation but, {like \cite{firat2020column},}  we did not use it.} The reported performance for each dataset is averaged over these five train-test splits. We use the same train-test splits as \cite{verwer2019dtreemip} and \cite{firat2020column}. 

For the large datasets, the train-test split used by \cite{firat2020column} is not available. We performed the same cleaning steps as in \cite{firat2020column}, including transforming classes to integers and converting string features into numerical ones. For each dataset, we then generated five random train-test splits like for the small datasets. The dataset specifications are listed in Table \ref{tab:datasets}. The datasets and the source code are available at https://github.com/krooonal/col\_gen\_estimator/tree/dtreedev.

\begin{table}[h]
\caption{Dataset {sizes}.}
\label{tab:datasets}
\begin{center}
\scalebox{0.85}{
\begin{tabular}{l|c|c|c}
Dataset           & Number of rows & Number of features & Number of classes\\
\hline
\multicolumn{4}{l}{Small} \\
\hline
Tic tac toe       & 958            & 18                 & 2                 \\
Indian diabetes  & 768            & 8                  & 2                 \\
Car evaluation    & 1728           & 5                  & 4                 \\
Seismic bumps     & 2584           & 18                 & 2                 \\
Spambase          & 4601           & 57                 & 2                 \\
Statlog satellite & 4435           & 36                 & 6                 \\
\hline
\multicolumn{4}{l}{Large} \\ 
\hline
Default Credit    & 30000          & 23                 & 2                 \\
Hand posture      & 78095          & 33                 & 5                 \\
HTRU 2            & 17898          & 8                  & 2                 \\
Letter Recognition      & 20000          & 16                 & 26                \\
Magic04           & 19020          & 10                 & 2                 \\
Shuttle           & 43500          & 9                  & 7                 \\
\hline
\end{tabular}
}
\end{center}
\end{table}

For the experiments, we used a cluster of CPU servers, each with two sockets of Intel(R) Xeon(R) Gold 6258R CPU @ 2.70GHz, 28 cores each (total of 56 cores), and 512GB RAM. However, for training, each process was limited to 8 threads and 16 GB of memory. We imposed these limits to ensure a fair comparison with \cite{firat2020column}. {For the comparison, we use the numbers reported in \cite{firat2020column}.}

We used Python 3 for all computations, scikit-learn version 1.2.2 \citep{scikit-learn} for running CART, Gurobi 9.5.0 \citep{gurobi} to solve the RMP (with or without integrality requirements), and the CP-SAT solver from Google OR-Tools \citep{Google-OR-Tools} to solve the SPs ((\ref{subold}) or (\ref{subnew})) and to generate cutting planes by solving (\ref{cut_gen}). Indeed, preliminary experiments showed that using the CP-SAT solver for the SPs is about twice as fast as using Gurobi. Since the CP-SAT solver can only solve problems with integer coefficients, we multiply the objective coefficients by a scaling factor $10^5$ and round them to the nearest integer. {This is equivalent to solving the model using a MIP solver with $10^{-5}$ as the optimality threshold.}

Our computational approach is the same as in \cite{firat2020column} {except for the optional} preprocessing and initialization steps \ref{extra_init} and \ref{preprocessing} below. The preprocessing step is further described in Section \ref{sec:resinitprep}. The complete process is as follows:

\begin{enumerate}
    \item \label{step_initCART} Run 300 iterations of CART on randomly selected 90\% of the training data and collect split checks for each node. 
    \item Select the $q$ most frequent split checks for each internal node as the candidate split checks, where $q = \left\lfloor \frac{150}{|N_{int}|}\right\rfloor$ for the root node and $q = \left\lfloor \frac{100}{|N_{int}|}\right\rfloor$ for the other nodes.
    \item \label{step_allCART} Run CART on the entire training data. Add the generated split checks to the candidate split checks and the generated paths to the initial RMP. This step ensures that the final output tree will have accuracy at least as high as the CART output.
    \item \label{extra_init} [Optional] Run 100 iterations of CART on randomly selected 80\% of the training data and add the generated paths to the initial RMP. A path is added only if the split checks it contains belong to the candidate split checks of the corresponding nodes.
    \item \label{preprocessing} [Optional] Preprocessing: If any two rows in the dataset have the same target and the rows take the same branches on all candidate split checks of the reachable nodes, remove one of the rows and increase the weight of the other row by one.
    \item Perform column generation iterations to solve the MP. Stop if it is solved to optimality (all SPs  failed to generate new columns) or a time limit is reached.
    \item Solve the integer MP (\ref{masterold}) restricted to the generated columns to get the final decision tree.
\end{enumerate}

{In our revised column generation approach, we apply a simpler version of the pricing heuristic presented in \cite{firat2020column}. We start with randomly selecting a leaf $l \in N_{lf}$, then for each node $j \in p_{BT}(l)$, we randomly select a split check $a \in S_j$ that has not been selected before. If we are able to select different split checks for each node in the path, we compute the target of the path that maximizes the accuracy for the set of rows following this path. Finally, we compute the reduced cost of the generated path using the duals from model (\ref{masterold}). If the reduced cost is greater than the threshold value $10^{-6}$, we add the path to the RMP. We repeat this process 100 times in each column generation iteration. Unlike \cite{firat2020column}, we do not maintain a set of feasible columns that are not yet added to the RMP.}

Except for the experiments in Section \ref{sec:resall}, the depth of the tree is fixed to $k = 4$ for all our experiments.

\subsection{Initialization and preprocessing results}
\label{sec:resinitprep}

In steps \ref{step_initCART}--\ref{step_allCART} above, we follow the process of generating candidate split checks as in \cite{firat2020column}. To initialize the RMP, \cite{firat2020column} only retain the paths generated from the last run of CART on the entire dataset (step \ref{step_allCART}). We extend further this initialization by collecting all paths generated during additional iterations of CART on a randomly selected subset of the training dataset (step \ref{extra_init}).

After generating all the candidate split checks and paths for initialization, we compute the set of reachable nodes for each row in the training dataset. A node $j$ is reachable by a row $r$ if an assignment of split checks exists for the ancestor nodes of node $j$ such that row $r$ can reach node $j$ by following the branches of the ancestor nodes. We use these sets of reachable nodes to compare pairs of rows in the training dataset.
	
If two rows take the same branch on all the split checks on the reachable nodes, for one of the rows, we remove the corresponding constraint (\ref{beta}) from integer MP (\ref{masterold}) and constraints (\ref{newsp2})-(\ref{newsp4}) from the SP (\ref{subnew}) (or constraints (\ref{firats2})-(\ref{firats4}) from the SP (\ref{subold})). We then add 1 to the weight $W_r$ of the other row $r$ that is kept as it is. Removing such duplicate constraints reduce the size of integer MP (\ref{masterold}) and SPs (\ref{subnew}) (or SPs (\ref{subold})).

We trained the models with three variations of our approach to evaluate the effects of the preprocessing step \ref{preprocessing} and the initialization step \ref{extra_init}. For these experiments, we imposed no time limit. The variations are the following:

\renewcommand{\descriptionlabel}[1]{\hspace{\labelsep}\text{#1}}
\begin{description}
    \item[\algdefault:] Default training process with steps \ref{extra_init} and \ref{preprocessing} enabled.
    \item[\algnopre:] Training process with preprocessing in step \ref{preprocessing} disabled.
    \item[\algnoinit:] Training process with extra initialization in step \ref{extra_init} disabled.
\end{description}

Figure \ref{fig:prep_notl} shows the solving time for all variations.\footnote{Result for the Letter Recognition instances is not included as the training process did not finish after 15 hours.} We can observe that \algnopre\ gives the best solving times for the small datasets (left part of the black divider) except for the Car evaluation dataset. However, for large datasets, it gives the worst solving times except for Magic04 dataset. This suggests that the preprocessing step \ref{preprocessing} is helpful only on large instances. 

The solving times for the \algnoinit\ variant are always larger than the \algdefault\ variant except for Spambase and Shuttle datasets. This suggests that extra initialization is helpful in general.

All variations have similar accuracy gains over CART, as expected {(see Figure \ref{fig:prep_notl_acc})}. The average training accuracy gain over CART is 0.9\% for all variations.

\begin{figure}[t]
\centering
\includegraphics[scale=0.6]{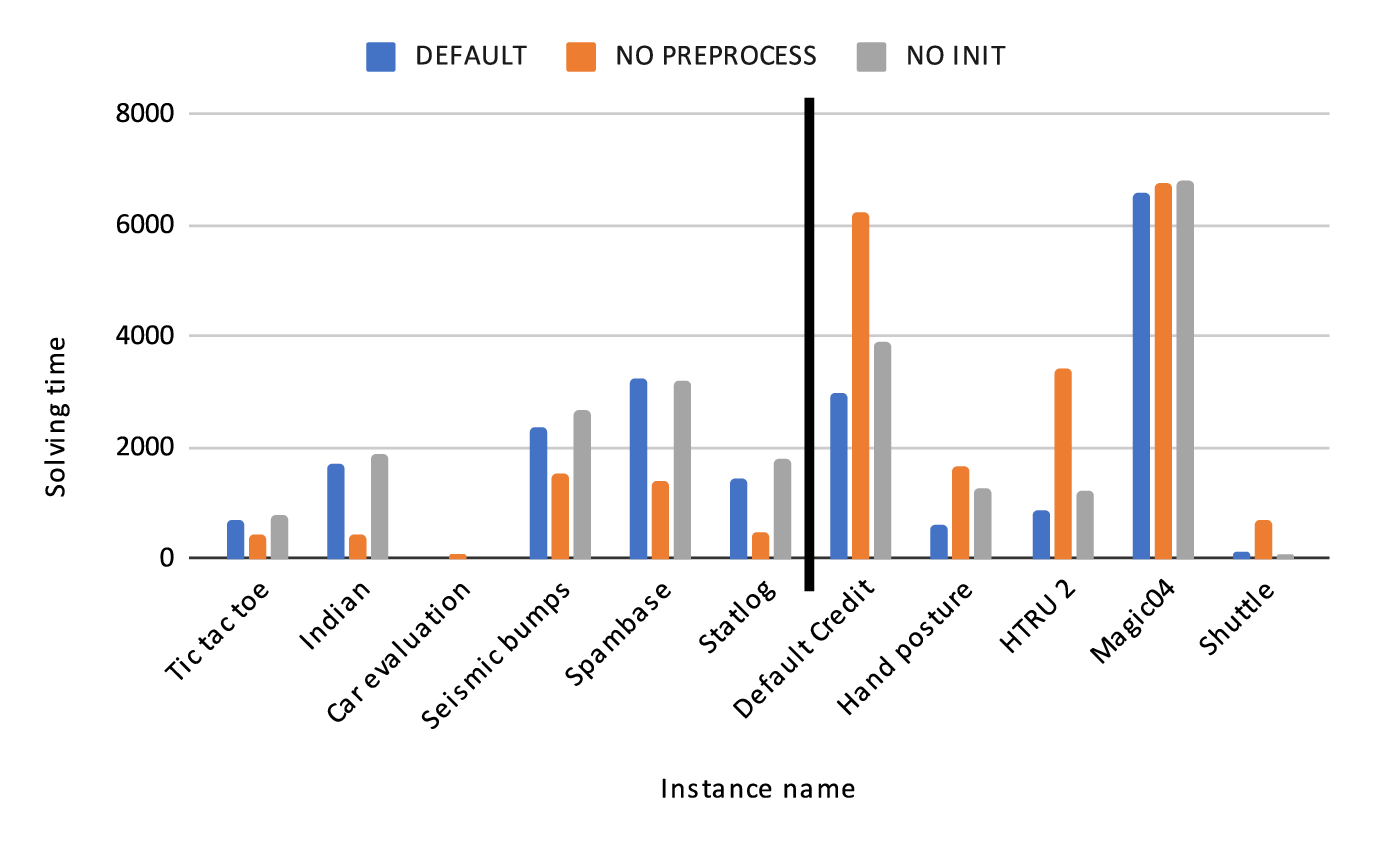}
\caption{Solving time for different ways of using preprocessing and initialization.}
\label{fig:prep_notl}
\end{figure}

\begin{figure}[t]
\centering
\includegraphics[scale=1]{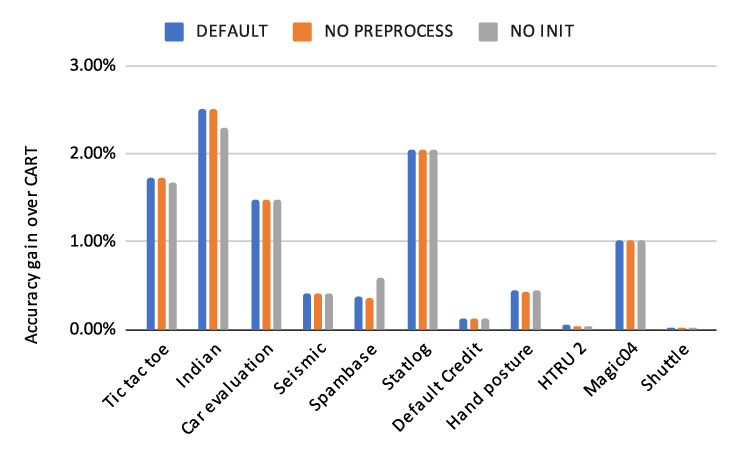}
\caption{{Accuracy gain over CART for different ways of using preprocessing and initialization.}}
\label{fig:prep_notl_acc}
\end{figure}

\subsection{Merged SPs results}
\label{sec:resmergedsp}

To evaluate the effect of merging the SPs, we disabled the pricing heuristic. We set a time limit of 1 hour (without the pricing heuristic, the solving times are too large). The extra initialization and preprocessing steps \ref{extra_init} and~\ref{preprocessing} were enabled for all datasets.

\begin{figure}[p]
\centering
\includegraphics[scale=0.8]{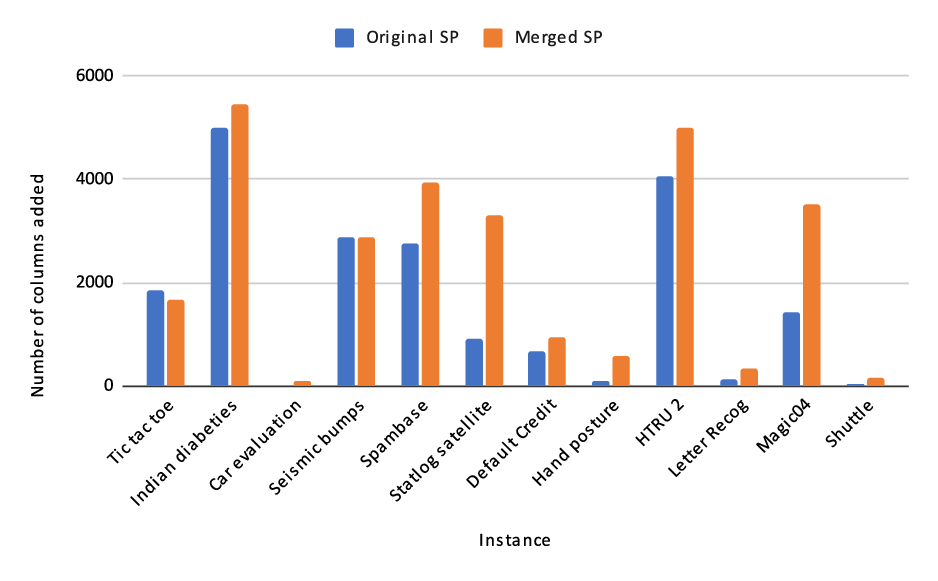}
\caption{Number of columns added with original and merged SPs.}
\label{fig:old_vs_new_sp_cols}
\end{figure}

\begin{figure}[p]
\centering
\includegraphics[scale=0.8]{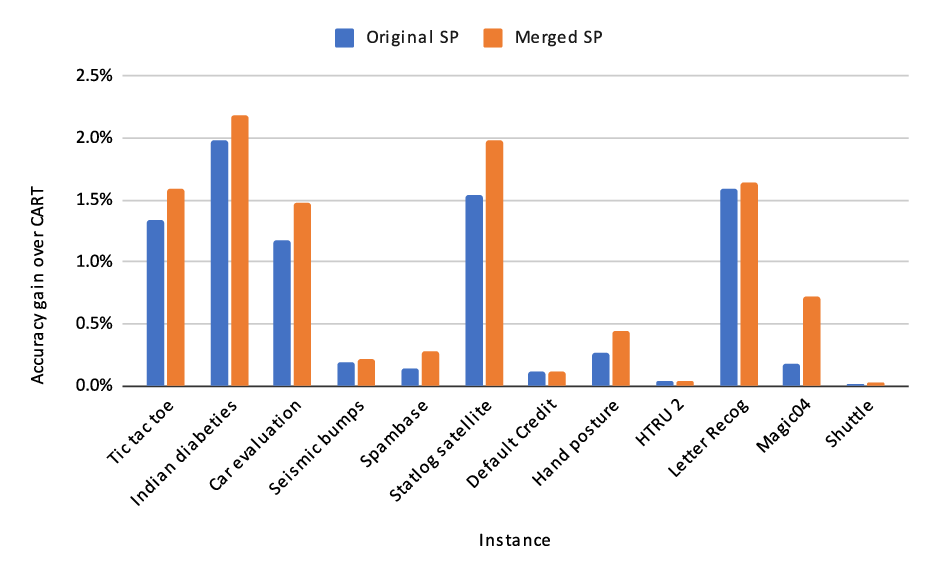}
\caption{Accuracy gain over CART on training datasets with original and merged SPs.}
\label{fig:old_vs_new_sp}
\end{figure}

The merged SPs are generally faster than the original SPs and hence add more columns to the RMP in the same time limit. We can observe this in Figure \ref{fig:old_vs_new_sp_cols}. Furthermore, as shown in Figure \ref{fig:old_vs_new_sp}, the extra columns added because of the faster SPs lead to larger gains in accuracy over CART on the training dataset.  The dataset `Tic-tac-toe' is the only notable exception where the original SPs added more columns compared to the merged SPs. However, the accuracy gains over CART are larger for the approach with the merged SPs. 

\subsection{MP constraints (\ref{beta}) results}
\label{sec:resmasterrow}

To evaluate the effect of constraints (\ref{beta}), also referred to in the following as the \textit{beta cuts}, we experimented with the following six algorithm variants:
\begin{description}
    \item[\nobeta:] The MP is solved without using any constraints (\ref{beta}).
    \item[\betacutslb:] The MP does not contain any constraints (\ref{beta}) at the beginning, but they are added as inequality cuts with only the lower bound. 
    \item[\betacutsub:] The MP does not contain any constraints (\ref{beta}) at the beginning, but they are added as inequality cuts with only the upper bound.
    \item[\betacuts:] The MP does not contain any constraints (\ref{beta}) at the beginning, but they are added as equality cuts.
    \item[\allbeta:] The MP contains all constraints (\ref{beta}) from the beginning. This is the setting used by \cite{firat2020column}.
    \item[\extrabeta:] Same as the \betacuts\ variant except that additional equality cuts for unlabeled rows can be generated using model (\ref{cut_gen}).
\end{description}

Consequently, beta cuts associated with unlabeled rows are only generated in the \extrabeta\ variant. 
In all variants with cuts, the cut generation algorithm is invoked at every 10 iterations of column generation. For these experiments, we considered no time limit and enabled the initialization and preprocessing steps \ref{extra_init} and  \ref{preprocessing}. 

Figure \ref{fig:beta_notl} shows the solving times for all variants.\footnote{Results for the Letter Recognition dataset are not included as the training process did not finish after 15 hours.} The solving times for \nobeta\ are the shortest among all variants. The \betacutslb\ variant has similar (but slightly larger) solving times than \nobeta. However, these two variants provide the worst linear relaxation {value} compared to the others (see {MP Value in} Figure \ref{fig:beta_notl_lprelax}). Hence, they also result in the lowest accuracy gains over CART compared to the other variants (see Figure \ref{fig:beta_notl_acc}).

\begin{figure}[t]
\centering
\includegraphics[scale=0.8]{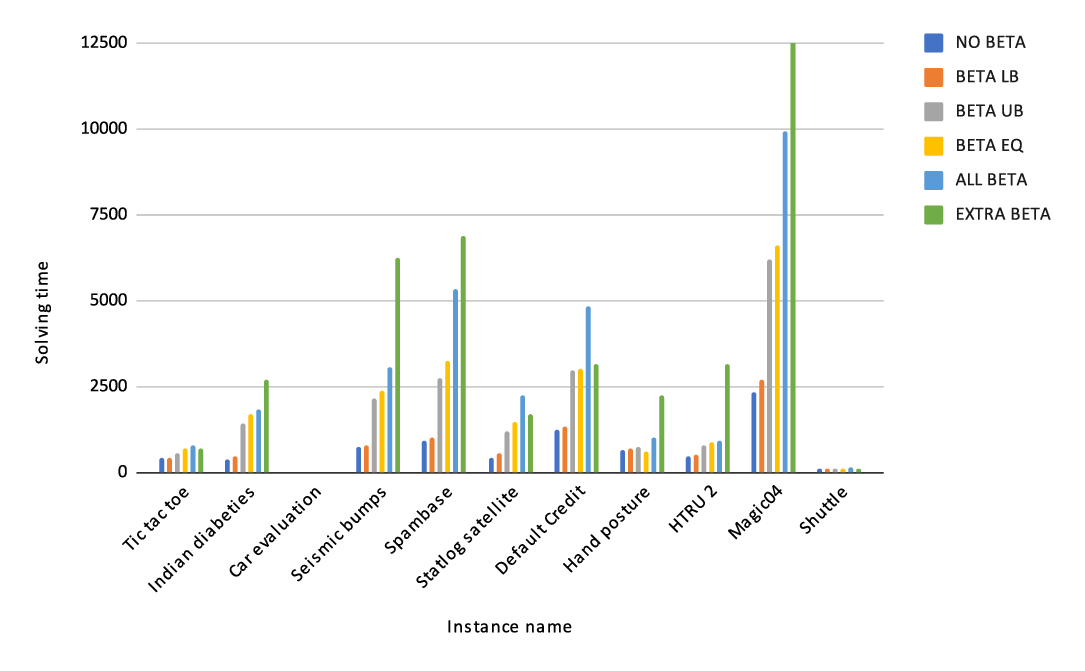}
\caption{Solving time for different ways of using constraints (\ref{beta}).}
\label{fig:beta_notl}
\end{figure}

The solving times for \betacutsub\ are larger than those of \nobeta. The \betacuts\ variant has similar but slightly larger solving times compared to the \betacutsub\ variant. These two variants are consistently faster than the \allbeta\ variant. Finally, the variant \extrabeta\ has the largest solving times. These four variants (\betacutsub, \betacuts, \allbeta, and \extrabeta) have similar linear relaxation (Figure \ref{fig:beta_notl_lprelax}) and result in similar accuracy gains over CART on the training dataset (Figure \ref{fig:beta_notl_acc}). The \extrabeta\ variant has slightly better accuracy gains over CART compared to the other three variants. However, the difference is too small to observe it in the diagrams (on average 0.95\% versus 0.93\% for \betacutsub, \betacuts, and \allbeta\ variants). 

\begin{figure}[t]
\centering
\includegraphics[scale=0.7]{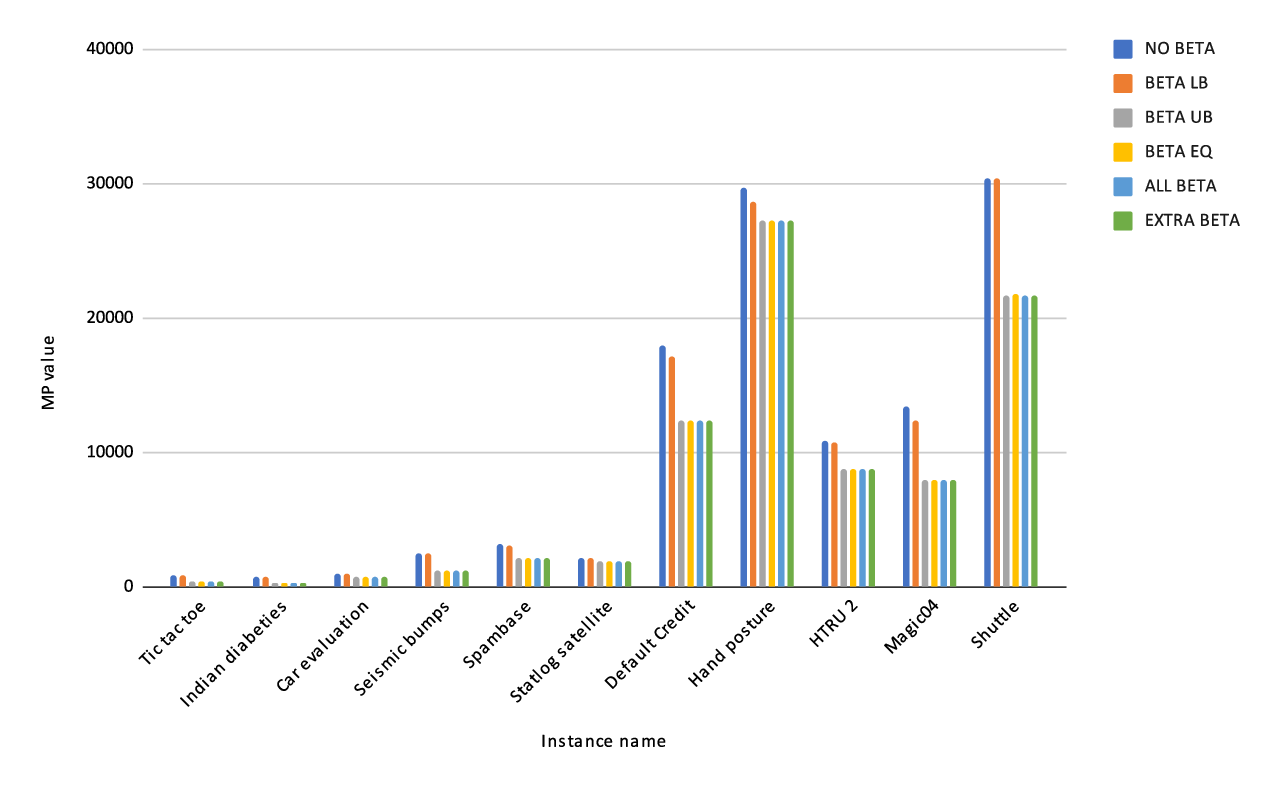}
\caption{MP value for different ways of using constraints (\ref{beta}).}
\label{fig:beta_notl_lprelax}
\end{figure}

\begin{figure}[t]
\centering
\includegraphics[scale=0.8]{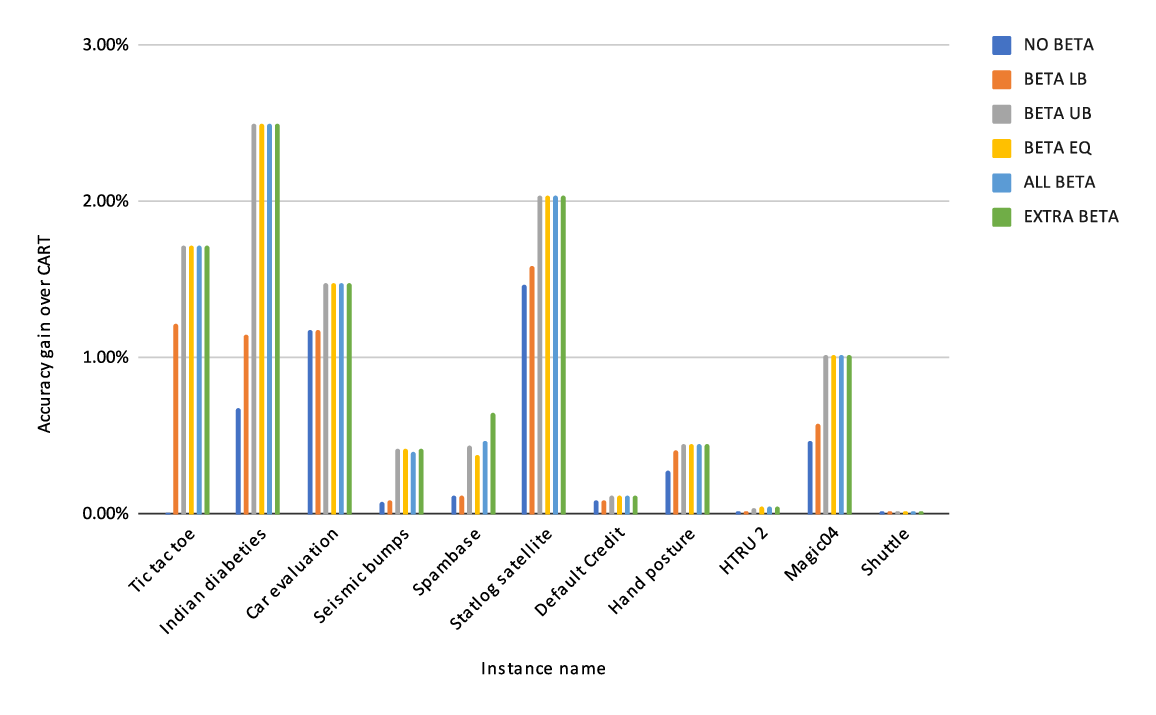}
\caption{Training accuracy gain over CART for different ways of using constraints (\ref{beta}).}
\label{fig:beta_notl_acc}
\end{figure}

Table \ref{tab:beta_smalltl_acc_gain} shows the average training accuracy gains over CART when we use a time limit of 600 seconds. The gains for \betacutsub, \betacuts, \allbeta, and \extrabeta\ are comparable and much better than those achieved by the other two variants. Given that the average solving time of \betacutsub\ is less than the three others, we conclude that \betacutsub\ has the best trade-off between solving time and training accuracy gain over CART. 

\begin{table}[H]
\caption{Average training accuracy gain over CART for different ways of using constraints (\ref{beta}) with a 600s time limit.}
\label{tab:beta_smalltl_acc_gain}
\begin{center}
\scalebox{0.85}{
\begin{tabular}{l|c|c}
Method & Train accuracy (\%) & Gain over CART (\%) \\
\hline
CART & 81.40 & 0.00 \\
\nobeta & 81.85 & 0.45 \\
\betacutslb & 82.04 & 0.64 \\
\betacutsub & 82.29 & 0.90 \\
\betacuts & 82.29 & 0.89 \\
\allbeta & 82.28 & 0.87 \\
\extrabeta & 82.26 & 0.86
\end{tabular}
}
\end{center}
\end{table}

Our approach does not consider all possible split checks for each node. Hence, we cannot prove the optimality of the generated tree even if we would be applying a full branch-and-price algorithm. However, for the problem limited to the candidate split checks for each internal node, we can analyze the optimality of our solutions. We compared the MP optimal values to the values of the best integer solutions found. The \extrabeta\ variant can prove optimality for 53 instances out of 55 instances (see Figure \ref{fig:beta_notl_optimality}). Thus, in most cases, we can prove optimality using a heuristic approach. This suggests that branch-and-price might not improve solution quality significantly in the presence of the beta cuts generated using model (\ref{cut_gen}). Note that the solution  values produced by the other variants (\betacutsub, \betacuts, and \allbeta) are very close to the optimal values, as shown in Figure \ref{fig:beta_notl_acc}.

\begin{figure}[t]
\centering
\includegraphics[scale=0.8]{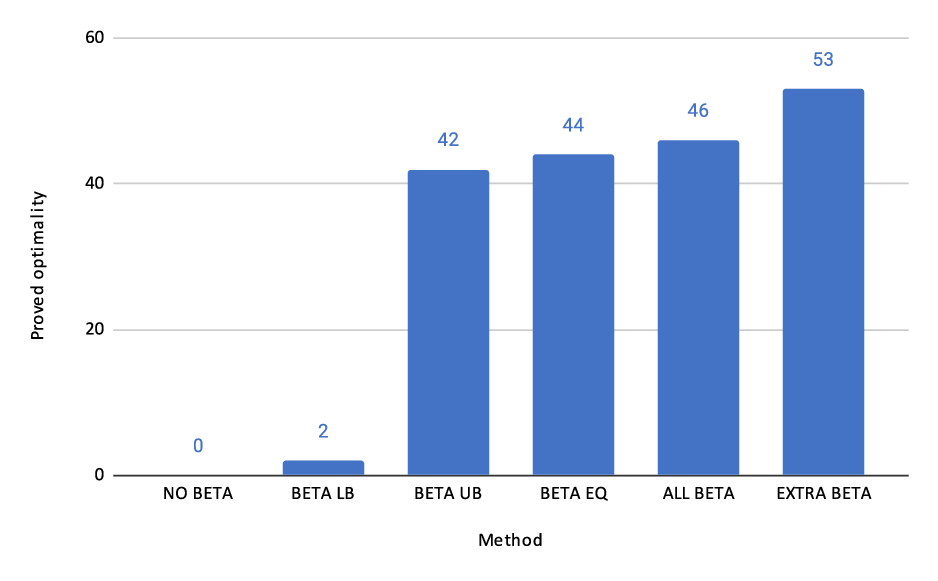}
\caption{Number of instances solved to optimality for different ways of using constraints (\ref{beta}).}
\label{fig:beta_notl_optimality}
\end{figure}

\subsection{Comparison with \cite{firat2020column}}
\label{sec:resall}

This section compares our best results against the  results reported in \cite{firat2020column}. For these experiments, we used the merged SP model (\ref{subnew}). We disabled the preprocessing step \ref{preprocessing} for small datasets and enabled it for the large datasets, while the initialization step \ref{extra_init} was enabled for all datasets. As done by \cite{firat2020column}, we set a time limit of 600 seconds for the training and considered three tree depths $k=2, 3, 4$.

Table \ref{tab:smallcomparision} compares training accuracy gains over CART against the column generation approach of \cite{firat2020column} where CG stands for column generation. Better gains are highlighted in bold. The revised column generation approach achieved strictly larger gains on 13 cases and strictly lower gains on 2 cases out of a total of 18 comparisons.

\begin{table}[t]
\caption{Comparision of training accuracy gains for small datasets.}
\label{tab:smallcomparision}
\begin{center}
\scalebox{0.85}{
\begin{tabular}{l|c|p{0.08\textwidth}|p{0.08\textwidth}|p{0.08\textwidth}|p{0.08\textwidth}|p{0.08\textwidth}|p{0.08\textwidth}}
\multicolumn{2}{c|}{} & \multicolumn{6}{c}{Accuracy (\%)} \\
\cline{3-8}
\multicolumn{2}{c|}{} & \multicolumn{3}{c|}{Firat et al.\ CG} & \multicolumn{3}{c}{Revised CG} \\
\hline
        Instance& $k$ & CART & CG & Gain & CART & CG & Gain \\
\hline
Tic tac toe & 2 & 71.2 & 71.8 & \textbf{0.6} & 71.3 & 71.8 & 0.5 \\
 & 3 & 75.4 & 76.7 & 1.3 & 75.4 & 77.4 & \textbf{1.9} \\
 & 4 & 84.4 & 85.4 & 1.0 & 84.5 & 86.2 & \textbf{1.8} \\
Indian diabeties & 2 & 77.3 & 78.8 & \textbf{1.5} & 77.3 & 78.8 & \textbf{1.5} \\
 & 3 & 78.9 & 81.2 & 2.3 & 78.9 & 81.3 & \textbf{2.4} \\
 & 4 & 82.9 & 84.2 & 1.3 & 82.9 & 85.3 & \textbf{2.4} \\
Car evaluation & 2 & 76.9 & 76.9 & \textbf{0.0} & 76.9 & 76.9 & \textbf{0.0} \\
 & 3 & 79.0 & 79.8 & 0.8 & 79.0 & 80.2 & \textbf{1.2} \\
 & 4 & 84.2 & 85.2 & 1.0 & 84.2 & 85.7 & \textbf{1.5} \\
Seismic bumps & 2 & 93.1 & 93.3 & 0.2 & 93.1 & 93.4 & \textbf{0.3} \\
 & 3 & 93.4 & 93.7 & \textbf{0.3} & 93.4 & 93.7 & \textbf{0.3} \\
 & 4 & 93.9 & 94.2 & 0.3 & 93.9 & 94.3 & \textbf{0.4} \\
Spambase & 2 & 86.0 & 87.1 & 1.1 & 85.9 & 87.2 & \textbf{1.3} \\
 & 3 & 89.6 & 90.3 & \textbf{0.7} & 89.6 & 90.3 & 0.6 \\
 & 4 & 91.6 & 91.6 & 0.0 & 91.6 & 92.0 & \textbf{0.4} \\
Statlog satellite & 2 & 63.2 & 64.0 & 0.8 & 63.2 & 64.3 & \textbf{1.1} \\
 & 3 & 78.7 & 79.5 & 0.8 & 78.7 & 80.0 & \textbf{1.3} \\
 & 4 & 81.6 & 82.9 & 1.3 & 81.6 & 83.6 & \textbf{2.0} \\
 \hline
\end{tabular}
}
\end{center}
\end{table}

The accuracies for training are not available for the large datasets in \cite{firat2020column}. We compare the testing accuracy gains against the column generation approach of \cite{firat2020column} in Table \ref{tab:largecomparision}. The revised column generation approach achieves strictly larger gains on 6 cases and strictly lower gains on 3 cases out of 18 instances. We also report the training accuracy gains obtained by our column generation heuristic for the large datasets in Table \ref{tab:largetraining}. Out of 18 instances, our heuristic yields a gain on 11 instances, with a maximum gain of 3.2\%.

\begin{table}[t]
\caption{Comparision of testing accuracy gains for the large datasets.}
\label{tab:largecomparision}
\begin{center} 
\scalebox{0.85}{
\begin{tabular}{l|c|p{0.08\textwidth}|p{0.08\textwidth}|p{0.08\textwidth}|p{0.08\textwidth}|p{0.08\textwidth}|p{0.08\textwidth}}
\multicolumn{2}{c|}{} & \multicolumn{6}{c}{Accuracy (\%)} \\
\cline{3-8}
\multicolumn{2}{c|}{} & \multicolumn{3}{c|}{Firat et al.\ CG} & \multicolumn{3}{c}{Revised CG} \\
\hline
        Instance& $k$ & CART & CG & Gain & CART & CG & Gain \\
\hline
Default Credit & 2 & 82.3 & 82.3 & \textbf{0.0} & 81.9 & 81.9 & \textbf{0.0} \\
 & 3 & 82.3 & 82.3 & \textbf{0.0} & 82.0 & 82.0 & \textbf{0.0} \\
 & 4 & 82.3 & 82.3 & \textbf{0.0} & 81.9 & 82.0 & \textbf{0.0} \\
Hand posture & 2 & 56.4 & 56.4 & \textbf{0.0} & 56.6 & 56.6 & \textbf{0.0} \\
 & 3 & 62.5 & 62.8 & 0.3 & 62.6 & 63.0 & \textbf{0.4} \\
 & 4 & 69.0 & 69.1 & 0.1 & 69.4 & 69.7 & \textbf{0.3} \\
HTRU 2 & 2 & 97.8 & 97.8 & \textbf{0.0} & 97.6 & 97.6 & \textbf{0.0} \\
 & 3 & 97.9 & 97.9 & \textbf{0.0} & 97.7 & 97.7 & \textbf{0.0} \\
 & 4 & 98.0 & 98.0 & \textbf{0.0} & 97.8 & 97.7 & -0.1 \\
Letter Recog & 2 & 12.5 & 12.7 & 0.2 & 12.3 & 12.7 & \textbf{0.4} \\
 & 3 & 17.7 & 18.6 & 0.9 & 17.6 & 19.6 & \textbf{2.0} \\
 & 4 & 24.8 & 27.0 & 2.2 & 24.5 & 27.8 & \textbf{3.3} \\
Magic04 & 2 & 78.4 & 79.1 & \textbf{0.7} & 79.0 & 79.7 & \textbf{0.7} \\
 & 3 & 79.1 & 80.1 & \textbf{1.0} & 79.1 & 79.9 & 0.8 \\
 & 4 & 81.5 & 81.5 & 0.0 & 81.6 & 82.4 & \textbf{0.8} \\
Shuttle & 2 & 93.7 & 93.7 & \textbf{0.0} & 93.9 & 93.9 & \textbf{0.0} \\
 & 3 & 99.6 & 99.7 & \textbf{0.1} & 99.6 & 99.7 & 0.0 \\
 & 4 & 99.8 & 99.8 & \textbf{0.0} & 99.8 & 99.8 & \textbf{0.0} \\
 \hline
\end{tabular}
}
\end{center}
\end{table}

\begin{table}[t]
\caption{Training accuracy gains {of the revised CG heuristic} for the large datasets.}
\label{tab:largetraining}
\begin{center} 
\scalebox{0.85}{
\begin{tabular}{l|c|r|r|r}
\multicolumn{2}{c|}{} & \multicolumn{3}{c}{Accuracy (\%)} \\
\hline
Instance & $k$ & CART & CG  & Gain \\
\hline
Default Credit & 2 & 82.0 & 82.0 & 0.0 \\
 & 3 & 82.2 & 82.2 & 0.0 \\
 & 4 & 82.3 & 82.5 & 0.1 \\
Hand posture & 2 & 56.6 & 56.6 & 0.0 \\
 & 3 & 62.7 & 63.1 & 0.5 \\
 & 4 & 69.4 & 69.9 & 0.4 \\
HTRU 2 & 2 & 97.8 & 97.8 & 0.1 \\
 & 3 & 98.0 & 98.1 & 0.1 \\
 & 4 & 98.2 & 98.3 & 0.0 \\
Letter Recog & 2 & 13.1 & 13.4 & 0.3 \\
 & 3 & 18.2 & 20.8 & 2.6 \\
 & 4 & 25.6 & 28.8 & 3.2 \\
Magic04 & 2 & 79.6 & 80.0 & 0.4 \\
 & 3 & 80.0 & 81.0 & 1.0 \\
 & 4 & 82.7 & 83.7 & 1.0 \\
Shuttle & 2 & 93.9 & 93.9 & 0.0 \\
 & 3 & 99.7 & 99.7 & 0.0 \\
 & 4 & 99.8 & 99.9 & 0.0 \\
\hline
\end{tabular}
}
\end{center}
\end{table}

Note that both approaches focus on improving the training accuracies and do not take extra steps to generalize the performance across the testing datasets. However, even if the model is not trained {having generalization in mind}, it still can significantly improve over CART. With more focus on generalizing the performance over unseen data, we {should be able to produce} even better results. We discuss some directions to address this issue in {the next section.}

\section{Conclusion and future work}
\label{sec:conclusion}

In this work, we presented modifications to the column-generation-based heuristic of \cite{firat2020column} that can be applied to generate decision trees. First, we reduced the number of SPs by moving the target computation in the constraints. This results in a faster generation of columns. Then, we showed that the data-dependent constraints in the integer MP are implied and presented ways to use them as cutting planes. This helps solve the RMP faster. Furthermore, we described an optimization model to generate these cutting planes on demand even if the corresponding data row is not in the training dataset. These additional cutting planes helped to show that we can generate optimal decision trees for the given candidate split checks in most instances. {Note that this model can also be linearized into a MIP model, {an avenue that might be explored in future research}.} Finally, we described a process for better initializing the RMP and preprocessing the dataset to reduce the size of the model. The extra initialization and preprocessing steps further help to reduce the solving times, {especially for the larger datasets.}

As future work, we can consider developing a diving heuristic \citep[see][]{SadykovEtAl2019} to derive better integer solutions for the variant where constraints~(\ref{beta}) are not used (\nobeta). This variant has the best solving time but the worst solutions compared to the other variants. Also, as the SPs only change in the objective function across the column generation iterations, we can explore how the information generated by solving previous SPs can be exploited to solve the subsequent SPs faster \citep{MIPCC23}.

Finally, as we did not focus on generalizing the performance of our approach for out-of-sample datasets, the suggestions made by \cite{firat2020column} to generalize this performance can be studied, namely, to penalize the number of active leaves (i.e., reached by at least one row) in the integer MP and to enforce in the SP a minimum number of rows following any generated path. {Like minimizing the number of internal nodes in \cite{bertsimas2017dtreemip}, minimizing the number of active leaves aims at reducing tree complexity.} It would be interesting to study the performance of the proposed heuristic with these changes.

\vspace*{10mm}
\noindent \textbf{Declaration of interest:} None. 

\vspace*{5mm}
\noindent \textbf{CRediT authorship contribution statement} \\

\textbf{Krunal K. Patel:} Conceptualization, Methodology, Software, Writing -- original draft. \textbf{Guy Desaulniers:} Methodology, Validation, Writing -- review \& editing, Supervision.  \textbf{Andrea Lodi:} Methodology, Validation, Writing -- review \& editing, Supervision. 

\newpage
\noindent \textbf{Acknowledgements} \\

The first author is supported by a research grant from the project DEpendable and Explainable Learning (DEEL). We would like to thank Giuliano Antoniol for coordinating between DEEL and Polytechnique Montreal. 

\vspace*{5mm}





\bibliographystyle{elsarticle-harv}
\bibliography{references}
\end{document}